\DeclareMathOperator*{\dis}{dis} 
\newcommand{\ddist}[1]{{L_{#1}}}
\newcommand{\tildeddist}[1]{{\tilde L_{#1}}}
\theoremstyle{plain}
\newtheorem{theorem}{Theorem}[section]
\theoremstyle{definition}
\theoremstyle{remark}
\begin{document}

\title{Orthogonalization of data via Gromov-Wasserstein type feedback for clustering and visualization}

\author{
  Martin Ryner \\
  Vironova AB, Stockholm, Sweden\\
  Division of Optimization and Systems Theory, \\
  Department of Mathematics,\\ 
  KTH Royal Institute of Technology, Stockholm, Sweden\\
  \texttt{martin.ryner@vironova.com, martinrr@kth.se} \\
  \\
  Johan Karlsson\\
  Division of Optimization and Systems Theory, \\
  Department of Mathematics, 
  \\KTH Royal Institute of Technology, Stockholm, Sweden\\
   \texttt{johan.karlsson@math.kth.se}  
}

\maketitle

\begin{abstract}
In this paper we propose an adaptive approach for clustering and visualization of data by an orthogonalization process.
Starting with the data points being represented by a Markov process using the diffusion map framework, the method adaptively increase the orthogonality of the clusters by applying a feedback mechanism inspired by the Gromov-Wasserstein distance.
This mechanism iteratively increases the spectral gap and refines the orthogonality of the data to achieve a clustering with high specificity. 
By using the diffusion map framework and representing the relation between data points using transition probabilities, the method is robust with respect to both the underlying distance, noise in the data and random initialization. We prove that the method converges globally to a unique fixpoint for certain parameter values.
We also propose a related approach where the transition probabilities in the Markov process are required to be doubly stochastic, in which case the method generates a minimizer to a nonconvex optimization problem.
We apply the method on cryo-electron microscopy image data from biopharmaceutical manufacturing where we can confirm biologically relevant insights related to therapeutic efficacy. We consider an example with morphological variations of gene packaging and confirm that the method produces biologically meaningful clustering results consistent with human expert classification.  

\end{abstract}

\section{Introduction}

Many problems in data science amounts to simplify and categorize a large set of unlabeled data points, and there is vast literature on this topic ranging from classical approaches such as K-means clustering \cite{macqueen1967some} and hierarchical clustering \cite{johnson1967hierarchical}, to more recent approaches such as spectral clustering \cite{ng2001spectral}, clustering using diffusion maps \cite{coifman2006diffusion}, topological data analysis \cite{carlsson2009topology}, and dimensionality reduction by minimizing the relative entropy \cite{hinton2002stochastic,van2008visualizing,mcinnes2018umap}.
Clustering or segmentation of a set of data points depends on the relationship between the points, which is noisy in many practical applications. 
Classical methods such as single linkage has high resolution and provides sharp decision support for ideal data \cite{johnson1967hierarchical}
but is highly sensitive to noise as it does not involve dynamics for noise cancellation, however there are attempts to remedy this \cite{ros2019hierarchical}. 
Averaging methods, on the other hand, such as clustering based on Diffusion maps and spectral clustering are more robust, but as the diffusion is a stochastic process it may result in lower resolution because of the dimensional reduction of the transition map.

In order to improve the resolution and the specificity of the diffusion methods, several approaches have been proposed to increase the spectral gap and to remove noise in the transition matrix.
In particular, optimization to create low rank approximations of the transition matrix has been used utilizing graph cut functionals, entropy functionals, and norms  \cite{zass2006doubly,wang2010learning,beauchemin2015affinity}. Another such concept that has been successfully used is doubly stochastic normalization, 
\cite{zass2006doubly, landa2021doubly}.

In this paper we propose a new approach that takes the diffusion map framework as a starting point
 and iteratively increases the spectral gap in the transition matrix by using feedback inspired by the cost in the Gromov-Wasserstein distance.
This corresponds to an orthogonalization of the transition matrix and its corresponding diffusion distance matrix.
It therefore gradually increases the spectral gap between the dominant eigenvalues representing each cluster and the small eigenvalues representing internal variations, as the Markov kernel gets increasingly non-ergodic. We show certain convergence results of the method and compare it with state-of-the-art on two examples.

\subsection{Our contribution}
In this paper we provide: 
\begin{itemize}[noitemsep, topsep = 0pt]
\item A framework for adaptive learning consisting of adaptive classification methods that is both robust to noise and provides clear separation and decision support.
\item We show the method has certain desirable properties, such as stability and unique solution under certain conditions. 
\item We show that one of the iterative methods can be seen as a nonlinear optimization problem.
\item 
Using our method on biomedical data, subgroups in the data become accumulated in a way that captures essential features consistent with expert classification. 
\end{itemize}

\section{Background}

In this section we will highlight relevant previous work and set up notation. 

\subsection{Diffusion maps and spectral clustering}\label{ssec:diffusion_maps}

Coifman and Lafon introduced a framework for diffusion maps \cite{coifman2006diffusion}  where diffusion processes are used to find geometric descriptions of data sets. In particular, they study the corresponding Markov matrices 
and their spectral properties to construct coordinates, or so-called diffusion maps, for  representing of geometric structures.  Specifically, a transition kernel $k_{\epsilon,\alpha}$ is generated by a positive but monotonically decreasing function on the distance $d$ between two points $x$ and $y$ in a compact set $X$ (typically a discrete set or a box in $\mathbb{R}^n$), e.g., $
k_{\epsilon,\alpha}: X^2 \rightarrow \mathbb{R}^+$,
\begin{equation}
k_{\epsilon,\alpha}(x,y) = \frac{ e^{-d(x,y)^2/\epsilon}}{q_\epsilon^\alpha(x)q_\epsilon^\alpha(y)}
\label{eq:normalization}
\end{equation}
where
$q_{\epsilon}(x) = \int_y e^{-d(x,y)^2/\epsilon} q(y) dy$ and $\alpha\ge 0$ is given. 
To maintain mass, the kernel is normalized: 
\begin{equation}
p_{\epsilon,\alpha}(x,y) = k_{\epsilon,\alpha}(x,y)/d_{\epsilon,\alpha}(x)
\label{eq:markovprocess}
\end{equation}
where $d_{\epsilon,\alpha}(x) = \int_{y\in X} k_{\epsilon,\alpha}(x,y)dy$.

Let $P$ be an averaging or diffusion operator of a function $f$, defined by
\[Pf(x) = \int_{y\in X}p_{\epsilon,\alpha}(x,y)f(y)dy,\] 
and the corresponding dual operator $M$ is a Markov operator, given by  $Mv(y) = \int_{x \in X}p_{\epsilon,\alpha}(x,y)v(x)dx$.
Since the kernel $k_{\epsilon,\alpha}$ is symmetric, the stationary distribution of $M$ is given by $\pi(x)= d_{\epsilon,\alpha}(x)/\int_{x} d_{\epsilon,\alpha}(x) dx$,
and thus  
\begin{equation*}
a(x,y) = \frac{\sqrt{\pi(x)}}{\sqrt{\pi(y)}}
p_{\epsilon,\alpha}(x,y) 
\end{equation*}
is also a symmetric kernel.
If $\pi(x) > 0$ for all $x$ then $a$ is a self adjoint bounded operator  and it can be written $a(x,y) = \sum_l \lambda_l\phi_l(x)\phi_l(y)$, where $\{\phi_i\}_i$ is an ON-basis. With this relationship we can write 
$p_{\epsilon,\alpha}(x,y) =\sum_l \varphi_l(y) \lambda_l\psi_l(x)$ where $\psi_i(x) = \phi_i(x)/\sqrt{\pi(x)}$ and $\varphi_i(y) = \phi_i(y)\sqrt{\pi(y)}$. 
Note here that for the maximum modulus eigenvalue $\lambda_1 = 1$, we have $\phi_1 = \sqrt{\pi}$ which gives $\psi_1 \equiv 1$ and $\varphi_1 = \pi$.

Furthermore Coifman and Lafon introduced a diffusion time $t$ on $p_1:=p_{\epsilon,\alpha}$ so that $p_t(x,y) := \sum_l \varphi_l(y) \lambda_l^t\psi_l(x)$ which gives rise to an idea of a diffusion coordinate $X_t(x) = \left\lbrace\lambda_l^t\psi_l(x)\right\rbrace_l$ with a corresponding diffusion distance
\begin{equation}
\label{eq:diffusionDistance}
D_p^t(x,y)  = \|p_t(x,\cdot)-p_t(y,\cdot)\|_{L^2(X,d\mu/ \pi)}=\left(\sum_l\lambda_l^{2t}\left(\psi_l(x)-\psi_l(y)\right)^2\right)^{\frac{1}{2}}.
\end{equation}

Landa, Coifman and Kluger adapted this framework \cite{landa2021doubly}  requiring that the kernels are doubly stochastic and was shown to be more robust to certain types of noise. Moreover, constructing doubly stochastic kernels can be done using entropy regularized optimal mass transport and Sinkhorn iterations \cite{cuturi2013sinkhorn}.

\subsection{The Gromov-Hausdorff distance and the Gromov-Wasserstein distance}\label{ssec:gromov_hausdorff}

The Gromov-Hausdorff distance is a notion of distance that characterizes how far two metric spaces are from being isometric \cite{edwards1975structure,gromov1981groups}.
To define this, consider the concept of correspondence $R$ between two metric spaces $(X,d_X)$ and $(Y,d_Y)$, which is a collection of pairs $(x,y)$ so that every $x\in X$ and $y \in Y $ exist in at least one pair.
More precisely a correspondence is defined by $R = \{(f(z),g(z)): z \in Z\}$, where 
$(Z,d_Z)$ be metric spaces and where $f: Z\rightarrow X$ and $g: Z \rightarrow Y$ are surjective maps. We let $\mathcal{R}$ denote the set of all such correspondences. A distortion $\dis$ of a correspondence is   
\begin{equation}\label{eq:GH}
\dis(R) := \sup\{c(x,y,x',y'); (x,y), (x',y') \in R\}
\end{equation}
where the cost $c(x,y,x',y') = |d_X(x,x')-d_Y(y,y')|$. Then the Gromov-Hausdorff distance can then be defined as  
\[d_{GH} = \frac{1}{2} \inf_{R\in \mathcal{R}} \dis (R).\]
Originally the Gromov-Hausdorff distance was defined using topological spaces, however \eqref{eq:GH} is an equivalent definition when considering metric spaces \cite{burago2001course}. This problem is in general computationally expensive, and  even for the discrete version, this can in many cases be formulated as a quadratic assignment problem which is NP-Hard  \cite{pardalos1994quadratic,memoli2007use}.

The Gromov-Wasserstein (GW) distance is a type of relaxation of \eqref{eq:GH}, and defines a distance $d_{GW,2}(\mu,\nu)$ between the two measures $\mu$ and $\nu$ given by
\begin{align*}
&\min_{\Gamma\ge 0} \left(\int_{x,y,x',y'}  \!\!\!\!\!c(x,y,x',y')^2 \Gamma(x,y)\Gamma(x',y') dx dydx'dy'\!\right)^{\frac{1}{2}}\\
& \mbox{ s.t. } \; \int \Gamma(x,y) dx=  \mu(y) \;\forall y, \int \Gamma(x,y) dy = \nu(x) \;\forall x. 
\end{align*}

Differences, similarities and continuity has been them has been thoroughly evaluated by Mémoli \citep{memoli2007use,memoli2008gromov}. This distance will be used to motivate the orthogonalization functional in Section~\ref{sec:Orthogonalization}.

\subsection{Visualization techniques}

Several visualization techniques, such as SNE \cite{hinton2002stochastic}, t-SNE \cite{van2008visualizing} and U-Map \cite{mcinnes2018umap}, utilize low dimensional function representations for reducing the dimensionality of the data whilst preserving the relations of the points.
Next, we will briefly summarize some main points from these methods, which are considered state-of-the-art for clustering and visualization.

Stochastic neighbor embedding (SNE) is a way to reduce the dimensionality of data by finding the best approximation in the relative entropy sense \cite{hinton2002stochastic}. Here, let $X$ be a finite set and let $P$ be the set of stochastic matrices with dimension $|X|\times |X|$. Given such a stochastic matrix $p\in P$, then we seek 
an approximation  $q$ with the exponential structure (with dimension N)
\[
Q=\left\{q_\Psi(x,y) := \frac{\exp(-\|\Psi(x)-\Psi(y)\|^2/\epsilon)}{\|\exp(-\|\Psi(x)-\Psi(\cdot)\|^2/\epsilon)\|_1}\mid \Psi: X \rightarrow \mathbb{R}^N\right\}
\]
which is as close as possible to $p$ in the relative entropy, i.e., which minimizes 
\begin{equation}\label{eq:SNE}
\min_{q_\Psi\in Q}\sum_{x,y \in X} p(x,y)\log \frac{p(x,y)}{ q_\Psi(x,y)}  
\end{equation}
Note that $q^* = p$ is optimal when $p$ is on the same form and dimensionality as $q$, i.e., when $p\in Q$.
A variation of SNE is t-SNE \cite{van2008visualizing} where one seeks a t-distributed approximation on the rational form
\[
q_\Psi(x,y) := \frac{(1+ \|\Psi(x)-\Psi(y)\|^2)^{-1}}{\|(1+ \|\Psi(x)-\Psi(\cdot)\|^2)^{-1}\|_1}
\]
that minimizes the objective in \eqref{eq:SNE}.

More recently, McInnes et al. introduced the U-Map method \cite{mcinnes2018umap} which also aims to visualize data using locally connected low dimensional manifolds, while minimizing
\[
\min_{q} \sum_{x,y \in X} \left(p(x,y)\log \frac{p(x,y)}{q(x,y)} + (1-p(x,y))\log \frac{1-p(x,y)}{1-q(x,y)} \right).
\]
The objective penalizes both the relative entropy of $p$ with respect to $q$ and of $1-p$ with respect to $1-q$ in a symmetric way.

\section{Main approach}
In this section we will introduce a functional having orthogonalization properties and then include it in the objective of an optimization problem regularized with the relative entropy. The optimal solution is a function in the exponential family, due to the entropy regularization (cf. \cite{georgiou2006relative}), and we use this to design an implicit function iteration. 
We then investigate the corresponding sequence and show that it converges globally for certain parameter values. After this, we will introduce a doubly stochastic optimization problem where the fix point of the sequence is the solution. Also, we consider a memory efficient  approximation for large scale problems.

\subsection{Orthogonalization functional}\label{sec:Orthogonalization}
Consider the Gromov-Wasserstein distance in the discrete case with $X=Y$ and where each point of support has mass one in section \ref{ssec:gromov_hausdorff}.
Clearly, the optimal transport plan is the identity map $\Gamma=id$. Now consider the cost of misalignment, i.e., the cost of 
a permutation $\Gamma$ that flips the pair $(\hat x,\hat y)$ where $\hat x, \hat y\in X$ and maps each of the remaining points to itself. The cost in the Gromov-Wasserstein sense of this misalignment is given by

\[G_{d_X}(\hat x,\hat y) = 4 \int_w (d_X(\hat x,w)-d_X(\hat y,w))^2 dw.\] 
In the Gromov-Wasserstein distance it is not required that $d_X$ and $d_Y$ represents distances. In fact, any relationship between points can be used and the optimization problem depends only on the differences defined by these relationships. For example, by taking a diffusion kernel and dividing it by the square root of its stationary distribution,  $\tilde p/\sqrt{\tilde{\pi}}$, gives a misalignment cost that can be expressed directly in terms of the Diffusion distance (\ref{eq:diffusionDistance}) as 
\[G_{\tilde p/\sqrt{\tilde{\pi}}}(\hat x,\hat y) = k D_p(\hat x,\hat y)^2,
\]
for some constant $k$. By integrating over all $x,y$ with a weight $p$ that the permutation occurs leads to the orthogonalization functional

\begin{equation}
O_{\tilde{p}}(p) = \int_{x,y} p(x,y)\ddist{\tilde{p}} (x,y)dxdy,
\label{eq:Polarization}
\end{equation}
where 
$$\ddist{p}(\hat x,\hat y):=D_p^1(\hat x,\hat y)^2= \|p(x,\cdot)-p(y,\cdot)\|_{L^2(X,d\mu/ \pi)}^2.$$
This term then penalizes weights $p$, which has large values for pairs $(x,y)$ which have large misalignment cost according to the objective in the Gromov-Wasserstein cost.  

To better understand the effect of this functional, consider the ideal case with perfect clusters.  Let $\tilde p$ correspond to a partitioning $X= \bigcup_{i=1}^n S_i$. This means that $\tilde p(x,y)$ is small whenever $x\in S_i$ and $y\in S_j$ with $i\neq j$, and since $\tilde p$ is normalized we have that  $\ddist{\tilde{p}}(x,y)$ is large. 
Consequently, if $p(x,y)$ is not consistent with this partition, i.e., contains large values for $x\in S_i$ and $y\in S_j$ with $i\neq j$, then the integrand $p(x,y)\ddist{\tilde{p}}(x,y)$ is large.  Therefore, by minimizing the functional \eqref{eq:Polarization} with respect to $p$, the corresponding optimal $p$ will be consistent with the partitioning in $\tilde p$. 

\subsection{Introductory optimization problem}\label{sec:Main}

Let $X\subset \mathbb{R}^n$ be a compact set and let 
\begin{align*}
E(X^2) = \left\{p\in L^\infty(X^2): \int_y p(x,y) dy = 1, \quad\forall x \in X,  0<\epsilon < p(x,y) \quad \forall x,y \in X\right\}
\end{align*} 
be the space of diffusion map kernels which are densities bounded away from zero and with normalized mass for each $x$. 

Now, consider the joint functional with the orthogonalization functional (\ref{eq:Polarization}) weighted with $c_2 \ge 0$ regularized with the relative entropy. That is, for a fix $\tilde{p}$ we consider the optimization problem
\begin{equation}
\begin{split}
\min_{p\in E(X^2)} \int_{x,y} p(x,y) \log \frac{p(x,y)}{q(x,y)} dxdy + c_2 O_{\tilde{p}}(p),
\label{eq:Optimizationproblem}
\end{split}
\end{equation}

Here $q$ represents a prior, which needs to be positive, for example a kernel obtained from a Diffusion map (\ref{eq:markovprocess}). In this case the total cost becomes a combination of the prior and the diffusion distance.

The Lagrangian of the problem is 
\begin{equation}
\begin{split}
\mathcal{L}(p,\xi) = \int_{x,y\in X} \left(p\log p + p(-\log q+c_2\ddist{\tilde{p}})+ \xi (x) \left(\mu(X)^{-1}-p(x,y)\right) \right) dxdy,
\end{split}
\label{eq:Lagrangian}
\end{equation}
where $\xi$ is the dual variable and $\mu(X)$ is the volume of the space. Minimizing with respect to $p$  gives the extreme point
\[p(x,y)^* = q(x,y) \exp(- c_2 \ddist{\tilde{p}}(x,y)+\xi(x)),\]
and solving the dual problem gives $\xi$ so that the constraints hold, i.e.,  
\[p(x,y)^* = q(x,y) \frac{\exp(- c_2 \ddist{\tilde{p}}(x,y))}{D_{\tilde{p}}(x)}\]
where
$D_{\tilde{p}}(x) = \int_y q(x,y)\exp(- c_2 \ddist{\tilde{p}}(x,y))dy.$

If $\tilde{p} \in E(X^2)$ then $0 <\epsilon< \exp(- c_2 \ddist{\tilde{p}}(x,y)) \le 1$ for all $x$ and $y$. Further, if $q \in E(X^2)$ then  
$0<D_{\tilde{p}}(x) $ for all $x\in X$. Thus, since $p^* \in E(X^2)$ is a stationary point in the relative interior of the domain and the problem is convex we can conclude that this is the optimal solution. Furthermore, $p^*$ is in fact the product of $q$ and a Diffusion map corresponding to the diffusion kernel $\tilde p$. Since the prior $\tilde p$ is arbitrary, one can consider linking it to $p$. 
This gives motivation to understand the operator $f:E(X^2) \rightarrow E(X^2)$ defined as
\begin{equation}
\label{eq:iteration2function}
f(p)(x,y) =q(x,y) \frac{\exp(-c_2 \ddist{p}(x,y))}{D_p(x)}.
\end{equation}
In particular, it is of interest to understand when this operator is a contractive map with the existence and possible uniqueness of solutions to $p=f(p)$.

\subsection{An iterative method with sequence of priors}

We propose an iterative method to solve the stationary point of \eqref{eq:iteration2function}, first by initialization of $p_0$ and then consider the sequence $\{p_n\}_{n}$ generated by
\begin{equation}
p_n(x,y) = q(x,y) \frac{\exp(-c_2 \ddist{p_{n-1}}(x,y))}{D_{p_{n-1}}(x)}.
\label{eq:iteration2}
\end{equation}
A natural initialization of the method is to use $p_0=q$, as this is the stationary point when $c_2 = 0$. 

Note that since $E(X^2)$ is not a complete subset of $L^\infty(X^2)$, we do not know if there is an accumulation point of this sequence in $E(X^2)$. However, initial studies show that convergence can be guaranteed in some cases.
In particular, we show that for any $q \in E(X^2)$ there exist a $c_2> 0$ so that if $p_0 \in E(X^2)$, then the 
sequence $\{p_n\}_n$ is bounded away  from 0 and infinity. Furthermore, we show that in this case $\{p_n\}_n$ is also a Cauchy sequence, resulting in that there is a unique fixed point.  

\begin{theorem} Given constants $0<\alpha_q<\beta_q$ there are constants $0<c_2$ and $0<\alpha_p < \beta_p$ such that whenever $p$ and $q$ satisfies  $\alpha_q \le q(x,y)\le \beta_q$ and  $\alpha_p \le p(x,y) \le \beta_p$ for all $x,y$, we have that  the function $f(p)$ defined by (\ref{eq:iteration2function}) satisfies $\alpha_p \le f(p)(x,y) \le \beta_p$. 
\end{theorem}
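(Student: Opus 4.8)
The plan is to prove forward invariance of the pointwise box $[\alpha_p,\beta_p]$ under $f$ by pushing the assumed bounds on $p$ and $q$ through the three ingredients of \eqref{eq:iteration2function}: the factor $\exp(-c_2\ddist{p})$, the row normalization $D_p(x)$, and the prior $q$. The first task is a uniform upper bound on $\ddist{p}$. Since $\alpha_p\le p\le\beta_p$ we have $|p(x,w)-p(y,w)|\le\beta_p-\alpha_p$, so I only need the weight $1/\pi$ appearing in the norm to be controlled. This follows from the stationarity identity $\pi(y)=\int_x p(x,y)\pi(x)\,dx$ together with $p\ge\alpha_p$ and $\int_x\pi\,dx=1$, which gives $\pi\ge\alpha_p$ and hence
\[
0\le\ddist{p}(x,y)=\int_w (p(x,w)-p(y,w))^2\frac{d\mu(w)}{\pi(w)}\le \frac{(\beta_p-\alpha_p)^2\,\mu(X)}{\alpha_p}=:L,
\]
a finite constant once the box is fixed. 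Thus $e^{-c_2 L}\le\exp(-c_2\ddist{p}(x,y))\le 1$ for all $x,y$.

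Next I would bound the normalization. Writing $M=\mu(X)$ and combining $\alpha_q\le q\le\beta_q$ with the previous estimate yields $\alpha_q e^{-c_2 L}M\le D_p(x)\le\beta_q M$, and substituting everything back into \eqref{eq:iteration2function} gives the two-sided bound
\[
\frac{\alpha_q}{\beta_q M}\,e^{-c_2 L}\le f(p)(x,y)\le\frac{\beta_q}{\alpha_q M}\,e^{c_2 L}.
\]
It then remains to choose $\alpha_p,\beta_p,c_2$ so that the left side is $\ge\alpha_p$ and the right side is $\le\beta_p$.

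The decisive structural point is the order in which the constants are chosen, since $L$ itself depends on $\beta_p-\alpha_p$ and a naive reading suggests a circular definition. I would first fix the box with slack around the $c_2=0$ values, e.g.\ $\alpha_p=\tfrac12\tfrac{\alpha_q}{\beta_q M}$ and $\beta_p=2\tfrac{\beta_q}{\alpha_q M}$, which freezes $L$ as a finite constant, and only afterwards set $c_2=(\ln 2)/L$. With this choice $e^{\pm c_2 L}=2^{\pm1}$, so the lower bound equals exactly $\alpha_p$ and the upper bound equals exactly $\beta_p$, establishing $\alpha_p\le f(p)(x,y)\le\beta_p$ as required.

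I expect the only genuine obstacle to be the uniform control of $\ddist{p}$, that is, ensuring $\pi$ stays bounded away from zero uniformly over all admissible $p$; the argument above via stationarity handles this cleanly, but if in the paper's convention $\pi$ is recomputed in a way not covered by that identity, one would need a separate lower bound on $\pi$. Everything else reduces to the elementary chain of inequalities and the explicit, non-circular choice of constants described above, so the proof is essentially an a priori bound rather than a delicate fixed-point argument.
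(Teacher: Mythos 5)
Your proof is correct and follows essentially the same route as the paper's: bound the stationary density $\pi$ from below via $\pi(y)=\int_x p(x,y)\pi(x)\,dx$, deduce a uniform bound $L$ on $\ddist{p}$, and sandwich $D_p(x)$ between $\alpha_q e^{-c_2L}\mu(X)$ and $\beta_q\mu(X)$ to get the two-sided bound on $f(p)$. In fact your write-up is the more complete of the two, since you make explicit the step the paper leaves implicit --- fixing $\alpha_p,\beta_p$ first (as a slack box around the $c_2=0$ values), which freezes $L$, and only then choosing $c_2$ --- thereby resolving the apparent circularity in the choice of constants.
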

\begin{proof}
The stationary distribution $\pi$ of $p$ is clearly bounded by $\alpha_p \le \|p^{-1}\|_\infty \le \pi \le \|p\|_\infty \le \beta_p$ since $\pi(y) = \int_x \pi(x) p(x,y)dx$.

Further, let  $p$ be so that $L_p(x,y) = \int_w(p(w,x)-p(w,y))^2/\pi(w) dw \le E$ by setting $E = \mu(X) \|p\|_\infty^2 /\|p^{-1}\|_\infty$.

Then a low bound of the denominator $D_p$ is
\begin{align*}
\min_x D_p(x) =
\min_x \int_y q(x,y)\exp(-c_2L_p(x,y)) dy \ge \mu(X) \alpha_q \exp(-c_2E).
\end{align*}
Then
\[\|f(p)\|_\infty \le \frac{\beta_q}{\mu(X)\alpha_q \exp(-c_2E)}.\]
Also, a high bound of the denominator $D_p$ is
\[\max_x D_p(x) =\max_x \int_y q(x,y)\exp(-c_2L_p(x,y)) dy \le \mu(X)\beta_q.\]
Then
\[\|f(p)^{-1}\|_\infty \le \frac{\alpha_q \exp(-c_2E)}{\mu(X)\beta_q}.\]
Also $\|f(p)-p\|_\infty \le \|f(p)\|_\infty + \|p\|_\infty < \infty$ for all $c_2 \le C < \infty$.
\end{proof} 

We follow the approach of Hu and Kirk with local radial contractions as presented in \cite{hu1978local}.  
Let $(X, d)$ be a metric space, $k \in (0, 1)$, and $g:X\rightarrow X$. If
for each $x\in X$ there exists a neighborhood $N(x)$ of $x$ such that for each
$u, v \in N(x)$, $d(g(u), g(v)) < kd(u, v)$, then $g$ is said to be locally contractive (with constant $k$). If it is assumed only that $d(g(u), g(x)) < kd(u, x)$ for all
$u\in N(x)$, then g is said to be a local radial contraction.
\begin{theorem}
\label{thm:localcontraction}
If $q$ is continuous, and $p$ is uniformly continuous on all the elements on a finite partition of $X$, there is a $c_2 > 0$ so that the iterative function \eqref{eq:iteration2function} is a local radial contraction.
\end{theorem}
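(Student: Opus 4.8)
The plan is to prove a local Lipschitz bound $\|f(p')-f(p)\|_\infty \le k\,\|p'-p\|_\infty$ on a small $L^\infty$-ball $N(p)$ around each admissible $p$, with a constant $k$ that is proportional to $c_2$ times quantities that remain bounded on the invariant region. Choosing $c_2$ small then gives $k<1$, which is exactly the local radial contraction property of Hu and Kirk. I would work entirely inside the box $\alpha_p\le p,p'\le\beta_p$, $\alpha_q\le q\le\beta_q$ left invariant by $f$ in the preceding theorem, so that the stationary densities satisfy $\alpha_p\le\pi_p,\pi_{p'}\le\beta_p$ and the denominators obey the two-sided bounds derived there, $\alpha_q\mu(X)e^{-c_2E}\le D_p\le\beta_q\mu(X)$; the neighborhood $N(p)$ is taken as an $L^\infty$-ball small enough that every $p'$ in it still lies in this box.

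The entire dependence of $f$ on its argument is through $\ddist{p}$, and $\ddist{p}(x,y)=\int_w (p(x,w)-p(y,w))^2/\pi_p(w)\,dw$ involves the stationary density $\pi_p$. The step I expect to be the \emph{main obstacle} is therefore controlling the stationary density: I would prove $\|\pi_{p'}-\pi_p\|_{L^1}\le C_1\|p'-p\|_\infty$ with $C_1$ depending only on $\alpha_p$ and $\mu(X)$. Since every kernel in the box is bounded below by $\alpha_p$, it satisfies a Doeblin condition and its Dobrushin coefficient is at most $1-\alpha_p\mu(X)<1$; the resulting uniform ergodicity makes the deviation operator $Z_p=\sum_{n\ge0}(P^n-\Pi_p)$ converge in operator norm with $\|Z_p\|\le(\alpha_p\mu(X))^{-1}$, and the perturbation identity $\pi_{p'}-\pi_p=\pi_{p'}(P'-P)Z_p$ then yields the bound. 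The delicate points are justifying the uniform convergence and boundedness of $Z_p$ across the whole box and carrying the argument on the continuous state space rather than a finite one; the continuity of $q$ and the uniform continuity of $p$ on the partition elements are what keep the transition operator and its deviation operator well behaved and ensure that $f(p)$ stays in the admissible regularity class.

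Granting the Lipschitz control of $\pi_p$, the remaining estimates are routine. Using the identity $\frac{a'^2}{\pi'}-\frac{a^2}{\pi}=\frac{(a'^2-a^2)\pi+a^2(\pi-\pi')}{\pi\pi'}$ with $a=p(x,\cdot)-p(y,\cdot)$, together with $|a|,|a'|\le 2\beta_p$ and $\pi,\pi'\ge\alpha_p$, and integrating in $w$, I obtain $\|\ddist{p'}-\ddist{p}\|_\infty\le C_2\|p'-p\|_\infty$ with $C_2$ assembled from $C_1,\alpha_p,\beta_p,\mu(X)$ and independent of $c_2$. Because $t\mapsto e^{-c_2t}$ has derivative bounded by $c_2$, writing $h_p:=e^{-c_2\ddist{p}}$ gives $\|h_{p'}-h_p\|_\infty\le c_2C_2\|p'-p\|_\infty$, and integrating against $q$ gives $\|D_{p'}-D_p\|_\infty\le c_2\,C_2\,\beta_q\mu(X)\|p'-p\|_\infty$; both carry the factor $c_2$.

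Finally I would combine these through the quotient defining $f$, using $\left|\frac{h_{p'}}{D_{p'}}-\frac{h_p}{D_p}\right|\le\frac{|h_{p'}-h_p|\,D_p+h_p\,|D_{p'}-D_p|}{D_{p'}D_p}$ with $h_p\le1$, the lower bound on $D_{p'}D_p$, and $q\le\beta_q$. Every term of the numerator carries a factor $c_2$ while the denominator tends to $(\alpha_q\mu(X))^2$, so the resulting constant has the form $k=c_2\,K(\alpha_p,\beta_p,\alpha_q,\beta_q,\mu(X))\,e^{2c_2E}$, which tends to $0$ as $c_2\to0$. Fixing $c_2$ small enough that $k<1$ delivers $\|f(p')-f(p)\|_\infty\le k\|p'-p\|_\infty$ on $N(p)$, that is, the claimed local radial contraction.
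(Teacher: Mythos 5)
Your proposal is correct, and it reaches the same high-level conclusion as the paper (all of the $p$-dependence of $f$ enters through $\exp(-c_2\ddist{p})$, every difference quotient carries a factor $c_2$ against constants that are uniform on the invariant box, so small $c_2$ forces the Lipschitz constant below $1$), but it handles the one genuinely delicate step --- the dependence of the stationary density $\pi_p$ on $p$ --- by a different mechanism. The paper linearizes: it writes $\hat\pi=\pi+v$, derives a Fredholm equation of the second kind for $v$, establishes compactness of the integral operator via an equicontinuity argument on the partition elements, uses simplicity of the dominant eigenvalue to get uniqueness, and sums a Neumann series (convergent only on the complement of the dominant eigenspace, since $\|p\|=1$) to exhibit $v$ as a bounded linear map $H_1(u)$ plus $\mathcal{O}(\delta^2)$; the contraction constant then comes from a first-order derivative bound $\|df\|/\|dp\|\le c_2\|f(p)\|_\infty\|H_3-H_2\|_\infty$. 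You instead exploit the uniform lower bound $p\ge\alpha_p$ as a Doeblin minorization, bound the Dobrushin coefficient by $1-\alpha_p\mu(X)<1$, sum the deviation operator $Z_p$ with the explicit norm bound $(\alpha_p\mu(X))^{-1}$, and use the exact perturbation identity $\pi_{p'}-\pi_p=\pi_{p'}(P'-P)Z_p$ to get a genuine (not merely first-order) Lipschitz estimate $\|\pi_{p'}-\pi_p\|_{L^1}\le C_1\|p'-p\|_\infty$. Your route buys explicit constants that are uniform over the whole invariant box rather than attached to a single base point, so it actually yields a two-sided Lipschitz bound $\|f(p')-f(p)\|_\infty\le k\|p'-p\|_\infty$ (local contractivity, which is stronger than the radial property the theorem asserts), and it sidesteps the compactness, spectral-simplicity, and Neumann-series convergence issues that the paper must argue; the paper's route, in exchange, identifies the explicit derivative operators $H_1,H_2,H_3$, which is structural information your argument does not produce. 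The only point you should make explicit is the mild circularity between $c_2$ and the box constants $\alpha_p,\beta_p$ inherited from the preceding theorem: since those bounds converge to fixed limits as $c_2\to0$, the box and hence your constant $K$ can be fixed uniformly for $c_2$ in a bounded interval before sending $c_2\to0$, which closes the argument.
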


\begin{proof}
By \eqref{eq:iteration2function} we have that
\[\| f(p_0)-f(p)\| = \left\| \frac{q\exp(-c_2 L_{p+\delta u})}{D_{p+\delta u}}-f(p)\right\|
\]
for which we seek a Taylor expansion. Let 
\[\hat{p} = p+u\]
for which $u$ is such that the dominant eigenvalue of $\lambda(\hat{p}) = 1$ and $p+u$ has a compact resolvent. Then we seek a stationary distribution $\hat{\pi}$ such that
\[\hat{\pi} = \int_x \hat{\pi}(x)(p(x,y)+u(x,y))dx.\]
Clearly, $u = 0$ gives $\hat{\pi} = \pi$, so we consider shifting the operator so that $\hat{\pi} = \pi + v$, where $\int_x \pi(x)v(x)dx = 0$, and then 
\begin{align*}
v(y)+\pi(y) =  \int_x (v(x)+\pi(x))(p(x,y)+u(x,y))dx\Rightarrow\\
v(y) = \int_x \pi(x) u(x,y) dx + \int_x v(x)( p(x,y)+u(x,y))dx.
\end{align*}
This is a Fredholm equation of the second type. Existence of a solution follows from the analysis of such \cite{fredholm1903classe}, and is common in complete textbooks in analysis.

Suppose there is a finite partition $\mathcal{P} = \{S_k\}_{k=1}^K$ of $X$ and $p$ is uniformly piecewise continuous on $\mathcal{P}^2$. Also take $\psi$ that is is piecewise uniformly continuous and bounded on $\mathcal{P}$.
Since $p$ is uniformly continuous on all partitions it is also bounded with $\sup_{x,y} |p(x,y)| = P$. Then 
\begin{align*}
|g(\psi)(y)| &= \left| \int_{x\in X} p(x,y) \psi(x)dx \right| \le P  \int_{x \in X}| \psi(x)| dx \le P \mu(X) \|\psi\|_\infty.
\end{align*}
From this it follows that $\|g(\psi)\|_\infty  = \sup_y |g(\psi)(y)| \le P \mu(X) \|\psi\|_\infty$. Thus, $g(\psi)$ is finitely bounded.  

To check continuity, for every $\epsilon > 0$ there exist $\delta > 0$ such that $|p(x,y)-p(x',y')| < \epsilon$ whenever $\|x-x'\|+\|y-y'\| < \delta$ in each region $S_k$. Therefore $g(\psi)(y) = \int_X p(x,y) \psi(x) dx$ is a piece wise uniformly continuous function for which 
\begin{align*}
|g(\psi)(y)-g(\psi)(y')|
&\le \sum_{k=1}^K \int_{S_k} |p(x,y)-p(x,y')||\psi(x)|dx \\
&\le \epsilon \int_X |\psi(x)|dx \le \epsilon \|\psi \|_\infty \mu(X)
\end{align*}
when $\|y-y'\| < \delta$. 

Take a sequence $\{\psi_k\}_k$ of piecewise uniformly continuous functions. Since this space is compact it has a subsequence that converge to $\psi$. We denote this convergent subsequence $\{\psi_{n}\}_{n}$. By above, $\{g(\psi_{n})\}_n$ are equicontinuous. Therefore 
$| g(\psi_n)(x)-g(\psi)(y)| \le | g(\psi_n)(x)-g(\psi_n)(y)| + |g(\psi_n) (y)-g(\psi)(y)| \le \epsilon\|\psi_n\|_\infty \mu(X) + |g(\psi_n) (y)-g(\psi)(y)|$ when $\|x-y\| \le \delta$. 

The second term
\begin{align*}
|g(\psi_n) (y)-g(\psi)(y)| &\le \int_x |p(x,y)| |\psi_n(x)-\psi(x)|\\
& \le P \int_x |\psi_n(x)-\psi(x)|dx \le P \mu(X) \|\psi_n-\psi\|_\infty.
\end{align*}
Since $\psi_n$ converge to $\psi$ there is an $N$ so that $\|\psi_N-\psi\|_\infty < \epsilon_2$ for every $\epsilon_2>0$, and compactness of $g$ follows.

Compact linear operators have a compact and at most countable spectrum, with an accumulation point in $0$. Also, strictly positive integral kernels have a single eigenvalue of maximum modulus. Thus, the perturbation of the stationary distribution is unique.   

Since there exist a finite set of nontrivial solutions and especially the unique stationary distribution $\hat{\pi} = \int_x \hat{\pi}(x)(p(x,y)+u(x,y))dx$, there exist a solution as long as $\int_y \int_x \pi(x)u(x,y)dx \hat{\psi}_1(y)dy = 0$. Indeed, as $\psi_1(y) = 1$, $\int_y u(x,y) dy = 0$ generates yet another Markov chain $p+u$, the above holds.

Since $\int_y u(x,y) \psi_1(y) dy= 0$, it is possible to use the Neumann series to compute $v$ , even though $\|p\|=1$ by letting $g_0(y) := \int_y u(x,y)\pi(x)dx$ and $g_k(y) := \int_x(p(x,y)+u(x,y))g_{k-1}(x)dx$ giving
\[v(x) = \sum_{i=0}^\infty g_k(x).\]
Thus, $v$ can be approximated to a bounded linear map 
\[v(y) = \int_x u(x,y) \pi(x) dx + \int_{x,z} p(z,y) u(x,z) \pi(x)dxdz+\mathcal{O}(\|u\|^2).\]

Normalizing and scaling $u = \delta \bar{u}$ gives $\hat{\pi}$ as the expression of a bounded linear map $H_1$
\[\hat{\pi} = \pi + \delta H_1(u)+ \mathcal{O}(\delta^2) \]
where 
\[
H_1(\bar{u})(y)  = \int_x \bar{u}(x,y)\pi(x)dx + \int_{x,z} p(z,y)\bar{u}(x,z)\pi(x)dxdz.\]

By Taylor expansion we have 
\[
\frac{1}{\hat{\pi}}= \frac{1}{\pi + \delta H_1(u)} = \frac{1}{\pi} - \delta \frac{H_1(u)}{\pi^2}+ \mathcal{O}(\delta^2).
\]

Expanding the square in $L$ and using the Taylor expansion of $1/\hat{\pi}$ gives
\begin{align*}
L_{p+\delta u} =& L_{p}
+ \delta \int_w 2  (p(x,w)-p(y,w))(u(x,w)-u(y,w))/\pi(w) dw\\
&-\delta \int_w (p(w,x)-p(w,y))^2 H_1(u)(w)/\pi^2(w) dw+ \mathcal{O}(\delta^2)\\
 =& L_p+\delta H_2(u) + \mathcal{O}(\delta^2).
\end{align*}

where $H_2(u)$ is a bounded linear operator of $u$. As an intermediate step we have the Taylor expansion of the exponential 
\begin{align*}
\exp(-c_2 L_{p+\delta u}) = \exp(-c_2 L_p)[ 1-c_2\delta H_2(u)+ \mathcal{O}(\delta^2)]. 
\end{align*}
Finally, the Taylor expansion of 
\begin{align*}
\frac{1}{D_{p+\delta u}(x)} =& \frac{1}{D_p(x)} + c_2 \delta \frac{\int_y q(x,y)\exp(-c_2L_p(x,y)) H_2(u)(x,y) dy}{D_p(x)^2}+ \mathcal{O}(\delta^2)\\
=& \frac{1}{D_p(x)} + c_2 \delta\frac{H_3(u)(x)}{D_p(x)} +\mathcal{O}(\delta^2)
\end{align*}
where $H_3(u)$ is a bounded linear operator of $u$. Putting all this into the norm gives
\begin{align*}
\| f(\hat{p})-f(p)\|= \|f(p)[c_2 \delta H_3(u)(x)-c_2\delta H_2(u) + \mathcal{O}(\delta^2)]\|.
\end{align*}
Thus
\begin{align*}
\frac{\|df\|}{\|dp\|} = \lim_{\delta \rightarrow 0}  \frac{\| f(p+\delta u)-f(p)\|}{\delta \|u\|} = c_2 \|f(p)[H_3(u)-H_2(u)]\|,
\end{align*}
and by using the infinity norm we obtain
\[ 
\frac{\|df\|_\infty}{\|dp\|_\infty} \le c_2 \|f(p)\|_\infty \|H_3(u)-H_2(u)\|_\infty.
\]

From the previous theorem there exist a $c_2> 0$ so that $\|f(p)\|_\infty  \le M< \infty$ for all $p\in E(X^2)$. Since $H_2$ and $H_3$ are bounded linear operators, we can always find a $c_2$ so that $\frac{\|df\|}{\|dp\|} \le m < 1$ and hence $f$ is a local radial contraction.
\end{proof}

\begin{theorem}
Given $q\in E(X^2)$, there is a $c_2> 0$ so that the sequence $\{p_n\}_n$ generated by \eqref{eq:iteration2function} is a Cauchy sequence.
\end{theorem}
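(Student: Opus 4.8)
The plan is to combine the two preceding theorems: the boundedness theorem confines the orbit to a bounded, convex, $f$-invariant set, while Theorem~\ref{thm:localcontraction} supplies a contraction estimate on that set, and a telescoping argument then yields the Cauchy property. Note that this strategy deliberately targets the Cauchy conclusion rather than convergence to a fixed point, so it does not require completeness of $E(X^2)$ (addressing the caveat raised earlier that $E(X^2)$ is not complete in $L^\infty(X^2)$).

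First I would fix $c_2 > 0$ small enough that both earlier theorems apply simultaneously. By the boundedness theorem there are constants $\alpha_p, \beta_p$ such that the set $K = \{p : \alpha_p \le p(x,y) \le \beta_p \ \forall x,y\}\cap E(X^2)$ is invariant under $f$; hence, initializing with $p_0 = q$ (or any $p_0\in K$), the whole sequence $\{p_n\}_n$ remains in $K$. Two structural facts are needed here. The set $K$ is convex, since it is cut out by the pointwise bounds together with the linear mass-normalization constraint. Moreover the regularity hypothesis of Theorem~\ref{thm:localcontraction} propagates along the orbit: because $f(p) = q\exp(-c_2\ddist{p})/D_p$ with $q$ continuous and $D_p$ continuous and bounded below on $K$, each iterate $p_n = f(p_{n-1})$ is piecewise uniformly continuous whenever $p_{n-1}$ is, so every iterate satisfies the hypotheses under which the contraction estimate was derived.

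Next I would upgrade the local radial contraction to a uniform one over $K$. The derivative estimate from Theorem~\ref{thm:localcontraction} reads $\|df\|_\infty/\|dp\|_\infty \le c_2 \|f(p)\|_\infty \|H_3(u)-H_2(u)\|_\infty$. The boundedness theorem gives $\|f(p)\|_\infty \le M$ uniformly on $K$, and the linear maps $H_2, H_3$ are assembled from $p$, $q$ and the stationary distribution $\pi$, all of which are uniformly bounded on $K$, with $\pi$ bounded away from zero by $\alpha_p \le \pi$; consequently their operator norms admit a bound independent of $p\in K$. Shrinking $c_2$ if necessary produces a single constant $k < 1$ and a single radius $\epsilon > 0$ such that $\|f(p')-f(p)\|_\infty < k\|p'-p\|_\infty$ for all $p\in K$ and all $p'\in K$ with $\|p'-p\|_\infty < \epsilon$. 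I would then pass to a global estimate using convexity: for arbitrary $p, p'\in K$ the straight segment joining them lies in $K$, and subdividing it into pieces shorter than $\epsilon$ and summing the radial contraction inequality along the resulting chain (following Hu and Kirk) gives $\|f(p')-f(p)\|_\infty \le k\|p'-p\|_\infty$ for all $p, p'\in K$. Thus $f$ is a genuine $k$-contraction on $K$, and applying it to consecutive iterates yields $\|p_{n+1}-p_n\|_\infty \le k^n\|p_1-p_0\|_\infty$, where $\|p_1-p_0\|_\infty$ is finite by the last line of the boundedness theorem. For $m > n$ the triangle inequality and geometric summation give $\|p_m-p_n\|_\infty \le \frac{k^n}{1-k}\|p_1-p_0\|_\infty \to 0$, so $\{p_n\}_n$ is Cauchy.

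The step I expect to be the main obstacle is the uniformization in the third paragraph: Theorem~\ref{thm:localcontraction} only asserts a radial contraction pointwise, under continuity hypotheses on $p$, so one must verify that the operator norms $\|H_2\|$ and $\|H_3\|$ do not degenerate as $p$ ranges over $K$. In particular the factor $1/\pi$ appearing in $H_1$, and therefore in $H_2$ and $H_3$, must stay controlled, which hinges on the stationary distribution remaining uniformly bounded away from zero across the whole invariant set. Once this uniformity of both the contraction constant $k$ and the neighborhood radius $\epsilon$ is secured, the convexity-chaining and the telescoping estimate are routine.
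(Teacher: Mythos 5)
Your proposal follows essentially the same route as the paper's proof: the paper likewise integrates the derivative bound from Theorem~\ref{thm:localcontraction} along the straight segment between consecutive iterates to obtain $\|p_{n+2}-p_{n+1}\|_\infty \le m\|p_{n+1}-p_n\|_\infty$ with $m<1$, bounds $\|p_1-p_0\|_\infty$ via the first (boundedness) theorem, and concludes by geometric summation. You are somewhat more explicit than the paper about the points it leaves implicit---invariance and convexity of the bounded set, and uniformity of the contraction constant over it---but the underlying argument is the same.
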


\begin{proof}

Let $p_{n+1} = p_n+u$. By integrating over the curve parametrized by $\alpha$ in $f(p_n+\alpha u)$ we get an upper bound for the direct distance
\begin{align*}
\|p_{n+2}-p_{n+1}\|_\infty &= \|f(p_{n+1})-f(p_n)\|_\infty\\
& \le \int_{\alpha=0}^1 \|df(p_n+\alpha u)\|_\infty d\alpha\\
&\le  m \int_{\alpha = 0}^1 \|dp(p_n+\alpha u)\|_\infty d\alpha\\
& = m\|p_{n+1}-p_n\|_\infty.
\end{align*} 
Hence, $\|p_{n+1}-p_n\| \le \|p_1-p_0\| m^n$, where there exist a $c_2 > 0$ so that $m < 1$. 
From the first theorem we see that for all $p \in E(X^2)$ and $c_2 < \infty$ we have that $f(p) \in E(X^2)$, hence $\|p_1-p_0\| \le C < \infty$.
Thus, the sequence $\{p_n\}_n$ generated by \eqref{eq:iteration2function} is a Cauchy sequence for any starting $p\in E(X^2)$.
\end{proof}

Thus, we can conclude that for a $c_2$ small enough there is a unique fixed point.

For large values of $c_2$ we have seen examples of more erratic behavior, e.g., with several solutions corresponding to different clustering which may depend on the starting point. In future work we are considering adapting $c_2$ which corresponds to the resolution of the method in order to simultaneously find several clustering relevant at different scales.

\section{Generalizations and efficient computations}

\subsection{Version with doubly stochastic constraints}\label{sec:DoublyStochastic}

In the approach in Section~\ref{sec:Main} the optimization problem is changed in each iteration, which can be seen as a type of bootstrapping that updates $\tilde p$ to refine the clustering in each iteration. However, by not only requiring the solutions to be stochastic, i.e., $p\in E(X^2)$, but doubly stochastic and symmetric, i.e., $p$ belongs to 
\begin{align*}
E_2(X^2)=\! \bigg\{&p\in L^\infty( X^2)\!\mid 
\int_{y\in X} p(x,y) dy= 1, \; \forall x \in X, \;0 <\epsilon< p(x,y)=p(y,x), \;  \forall x,y \in X \bigg\},
\end{align*}
the objective function can be adjusted so that the corresponding update can be seen as minimization of a non-convex objective function. In particular, consider the orthogonalization functional  
\begin{align*}
&\hat{O}(p) =\! \int_{x,y,w}\!\! \left(\!p(x,y)(p(x,w)-p(y,w))^2 - 2p^2(x,y)p(x,w)\!+ \!\frac{4}{3}  p(x,w)p(w,y)p(x,y)\!\right)\!dxdydw,
\end{align*}
which has the Gateaux derivative $$d\hat{O}(p;u) = \int_{x,y,w} \left((p(x,w)-p(y,w))^2 dw\right) u(x,y) dxdy = \int_{x,y} \ddist{p}(x,y) u(x,y) dxdy $$
in any direction satisfying $\int_x u(x,y) dx = \int_y u(x,y) dy = 0$, required in order for $p+u\in E_2(X^2)$.
Then by considering the optimization problem
\[ 
\min_{p\in E_2} \int_{x,y \in X} p \log \frac{p}{q} dxdy + c_2 \hat{O}(p)
\]
and applying Lagrange relaxation it can be shown that the optimal solution $p^*$ is of the form 
$p^*(x,y) =q(x,y) \exp(-c_2 \ddist{p^*}(x,y))/(D_1(x)D_2(y))$, and 
thus the stationary points correspond to fixed points of the mapping
\[
f(p)(x,y) =q(x,y) \frac{\exp(-c_2 \ddist{p}(x,y))}{D_1(x)D_2(y)}
\]
where $D_1$ and $D_2$ are the normalizing factors that makes $p$ doubly stochastic. The factors $D_1$ and $D_2$ can be obtained with Sinkhorn's method, made popular on entropy regularized optimal transport described by Cuturi in \cite{cuturi2013sinkhorn}.
We can also see that if $q,p\in E_2(X^2)$, then $D_1 = D_2$ and $f(p)\in E_2(X^2)$.
Also in this case, the mapping can be shown to be contractive for $c_2$ sufficiently small. In fact, for any compact subset of $E_2(X^2)$, if the constant $c_2>0$ is sufficiently small then the problem is convex on the subset.
This can be seen by considering the second derivative of the Lagrangian in direction $u$
\begin{align*}
d^2\mathcal{L}(p,u) = &\int_{x,y} u(x,y)^2/p(x,y)dxdy - 4 c_2 \int_{x,y,w} p(x,w) u(x,y) u(y,w) dxdydw
\end{align*}
and noting that the first term is dominating the second term on any compact subset if
 $c_2$ is small enough.  Hence the extreme point is a unique minimizer. In the case of single stochastic diffusion, a simple explicit expression for the rest term has not been found, that gives this self-similar structure. 

\subsection{Truncation of eigenvectors for efficient computations}\label{sec:trunc}
For large data sets, it is computationally expensive to compute and store $q$ and the iterates $\{p_n\}_n$ since all the elements are $N\times N$ matrices. 
However, by utilizing \eqref{eq:diffusionDistance} we may express $\ddist{\tilde p}$ in terms of its diffusion coordinates, which correspond to the eigenvalues of $\tilde p$. This allows for fast approximate computations using only the dominating eigenvalues. 

Note that at the fixed point we can express $p$ in terms of the bi-orthogonal vectors as in Section  \ref{ssec:diffusion_maps}
\begin{align*}
\sum_{i=1}^\infty \lambda_i\varphi_i(y)\psi_i(x)=p(x,y) = q(x,y) \frac{\exp(-c_2 \ddist{p}(x,y))}{D_p(x)}
\end{align*}
Integrating against $\psi_j(y)$, we have by bi-orthogonality that $\int_y \psi_j(y) \varphi_i(y)dy=\delta_{i,j}$ where $\delta_{i,j}$ is the Kronecker delta function, and thus  
\[
\lambda_i \psi_i(x) =  \frac{\int_{y\in X} \psi_i(y) q(x,y)\exp(-c_2 \ddist{p}(x,y)) dy}{D_p(x)}.
\]

Further, noting that $\ddist{p}(x,y) = \sum_{i=1}^\infty \lambda_i^2 (\psi_i(x)-\psi_i(y))^2$ may be expressed in terms of the eigenvectors it can be approximated by 
\begin{align*}
\ddist{p,M}(x,y) &= \sum_{i=1}^M \lambda_i^2 (\psi_i(x)-\psi_i(y))^2,
\end{align*}
and the full fixpoint iteration can be approximately performed using only the largest $M$ eigenvectors. 

Let $\tildeddist{{p,M}}(x,y) = \sum_{i=M+1}^\infty \lambda_i^2 (\psi_i(x)-\psi_i(y))^2$, and consider the perturbed $q$ given by
\[
q_M(x,y) = \frac{q(x,y)\exp(-c_2 \tildeddist{p,M}(x,y))}{\int_y q(x,y)\exp(-c_2 \tildeddist{{p,M}}(x,y)) dy}.
\]

Also, let an operator $T_M$ define the truncation $p_M^* = T_M(p^*) = \sum_{i=1}^M \lambda_i \psi_i(x)\varphi_i(y)$ denote the truncated sum of the original $p^*$.
Then $T_M(f(p^*_M,q_M)) = T_M(p^*) = p^*_M$ and the iterated sequence has converged to this instead, as the normalization of $q_M$ cancels out in the normalization of $p$ in $f$. Thus, a truncation $p_M^*$ of $p^*$ can be considered as a perturbation $q_M$ of $q$. Existence and uniqueness follows but for the perturbed $q$. Suppose that the eigenvalues are ordered in descending order $|\lambda_1| > |\lambda_2| > |\lambda_3|,...$ then the perturbation of $q$ goes to zero as $M \rightarrow \infty$.

The connection between $\Psi$ to the more commonly used normalized left and right eigenvectors in the unweighted $L^2(X)$-norm as used in most software packages exist but with a scaling factor $N_l \ge 0$ for each eigenvalue. 
Suppose the eigenvector $v_l = N_l \psi_l$. Since $\|\phi_l\|_{L^2(X)} = 1$ is normalized we have that 
$\phi_l = \sqrt{\pi}\psi_l= \frac{\sqrt{\pi}v_l}{\|\sqrt{\pi}v_l\|_{L^2(X)}}$. Also, since $\|v_l\|_{L^2(X)}=1$ is normalized we have that
$ v_l =\psi_l/\|\psi_l\|_{L^2(X)}=\psi_l/\sqrt{\int_x \phi_l(x)^2/\pi(x)dx}= \psi_l \|v_l\sqrt{\pi}\|_{L^2(X)}=\psi_l N_l$
giving a convenient factor from $v_l$ to $\psi_l$, namely 
$\psi_l = v_l/\|\sqrt{\pi}v_l\|_{L^2(X)}$.

\section{Examples}\label{ssec:Examples}

\subsection{Clustering of high dimensional data}\label{sec:ex1}
As a starting example consider three clusters embedded in noise given by 
\[q_0 := 10 S_{3N\times 3N} +\begin{pmatrix}
S_{N\times N} & 0& 0\\
0 &S_{N\times N} & 0\\
0&0&S_{N\times N}
\end{pmatrix}.
\]
Here, $S_{N\times N}$ is a symmetric $N\times N$ matrix with uniform random numbers in $[0,1]$. When $q$ is 
processed using the method the resulting diffusion coordinates of the limit point $p^*$ is presented in  Figure~\ref{Noise}. When using the truncation of eigenvectors as presented in section \ref{sec:trunc} the noise is removed and the separation between clusters is increased, which gives means of an algorithmic way to cluster the data based on distance.  

\begin{figure}[ht]
\vskip 0.2in
\begin{center}
\centerline{\includegraphics[width=0.8\columnwidth]{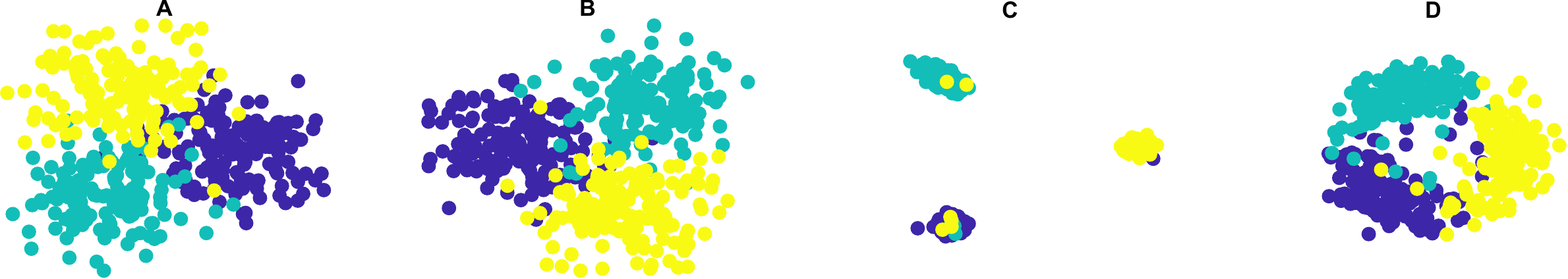}}
\caption{ Clustering three diverse groups with added random noise. $A$ shows the two dominant dimensions of the diffusion coordinate of the unprocessed data, $B$ shows the two dominant dimensions of the diffusion coordinate of the orthogonalized $p$, showing that the method does not add structure to the data. $C$ shows the two dominant dimensions of the processed optimal $p$ using the truncation (20 dimensions) of eigenvectors as presented in section \ref{sec:trunc} giving a separation between the artificial clusters. $D$ shows a t-SNE for comparison. }
\label{Noise}
\end{center}
\vskip -0.2in
\end{figure}

\subsection{Clustering of gene therapy vector expressions}\label{sec:ex2}
Adeno Associated Viruses (AAVs) can be used as vectors in the delivery of genes into cells to achieve a therapeutic effect. Novel therapeutics include treatment of e.g. genetic blindness and muscle disorder. When manufacturing such vectors, integrity, packaging performance of the gene cargo and purity of the product is of highest importance. There are only a few available analytic methods to measure the success of packaging the correct cargo into a vector, which makes development hard and quality control expensive. No cargo, truncated cargo or other than expected cargo may be packaged into the particles during the assembly process in the manufacturing host cell.

The particles have an icosahedral structure of about 25 nm in diameter. Using electron microscopy, the morphology such as the density variation due to cargo inside the particle and capsid structure can be observed. Images were collected and 61x61 sub images approximately centering on single particles were extracted. To evaluate the performance of the proposed method, each particle was classified by an expert virologist in three classes: full, empty, and uncertain, see figure \ref{AAVs}. The distance was chosen to be the direct distance between the pixel values, i.e. the squared Euclidean distance defining the diffusion kernel $q(i,j) := \exp(-c_{1,i} \|x_i-x_j\|_2^2)$. Here $c_{1,i}$ was set so that the number of neighbors were $N= 200$, i.e. $\mu(\{j: c_{1,i} \|x_i-x_j\|_2^2 \le 1/\sqrt{2}\}) = N$. The diffusion coordinates of the optimal $p^*$ by (\ref{eq:iteration2function}) were used as input to a t-SNE (see Figure~\ref{AAVs}) compared with a t-SNE and U-Map of the original data. The spectrum of $p^*$ having $\sum_i \lambda_i \approx 10$ reveals the number of clusters. Using k-means to separate the clusters, the performance of the method to separate the classes provided by the virologist is highly accurate, and the method also separates different viewpoints of the icosahedral structure which manifests in a hexagonal pattern when projected on a two-dimensional image seen in figure \ref{AAVs2}.

\begin{figure}[]
\vskip 0.2in
\begin{subfigure}[b]{0.2\textwidth}
\centering
\includegraphics[angle=90,width=0.35\columnwidth]{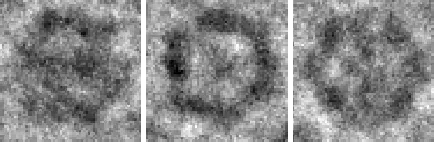}
\subcaption{AAV particles.}
\end{subfigure}
\begin{subfigure}[b]{0.75\textwidth}
\centerline{\includegraphics[width=0.95\columnwidth]{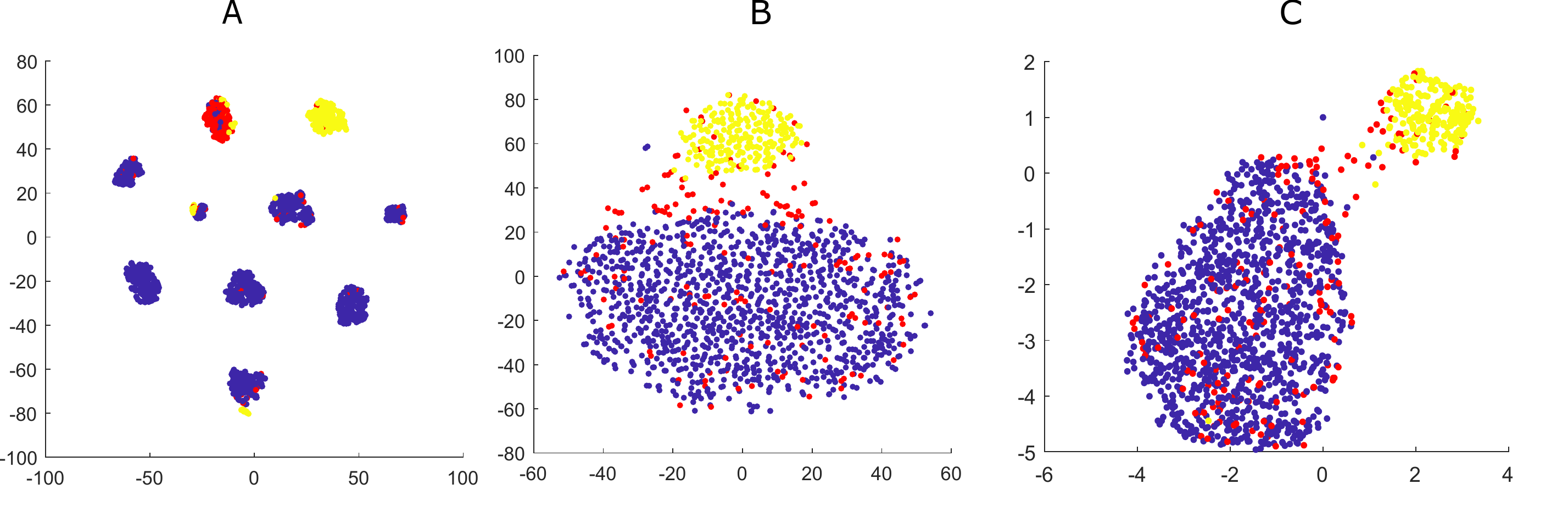}}
\subcaption{Visualization of data. A: proposed approach. B: t-SNE. C: U-Map.}
\end{subfigure}

\caption{ (a) Examples of AAV particles with different degree of packaging. From top: Particle with uncertain degree of packaging, Empty particle, Full particle. The imaging is slightly under-focused to increase contrast. (b) Clustering of AAV particles. (A) shows a t-SNE plot of the orthogonalized data using the 200 nearest neighbors for scaling. The coloring is based on expert evaluation of packaging in empty (yellow), full (blue) and an uncertain class (red). (B) shows a t-SNE plot of the original data. The expert classification is not preserved. (C) shows a U-Map plot of the original data. The expert classification is not preserved.
}
\label{AAVs}
\vskip -0.2in
\end{figure}

\begin{figure}[]
\vskip 0.2in
\begin{center}
\begin{subfigure}[b]{0.32\textwidth}
\includegraphics[width=0.99\columnwidth]{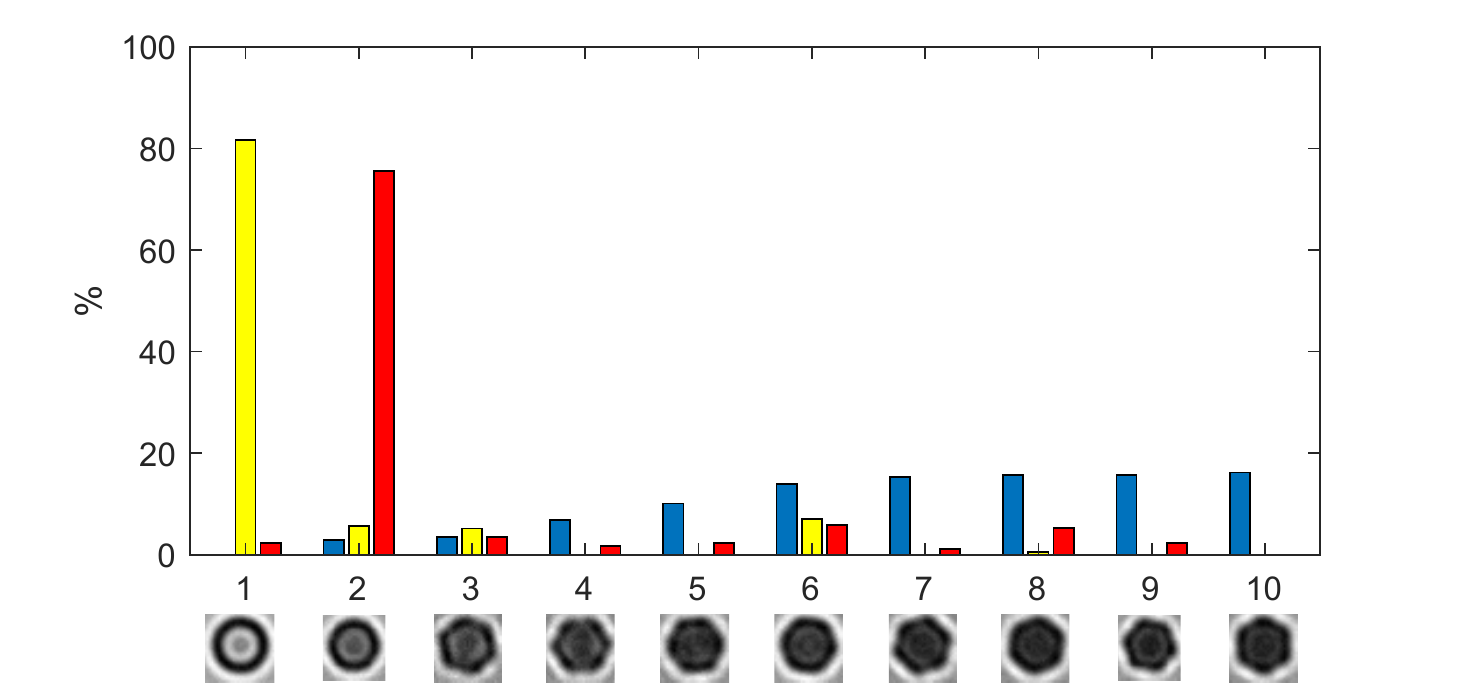}
\subcaption{Proposed approach}
\end{subfigure}
\begin{subfigure}[b]{0.32\textwidth}
\centerline{\includegraphics[width=0.99\columnwidth]{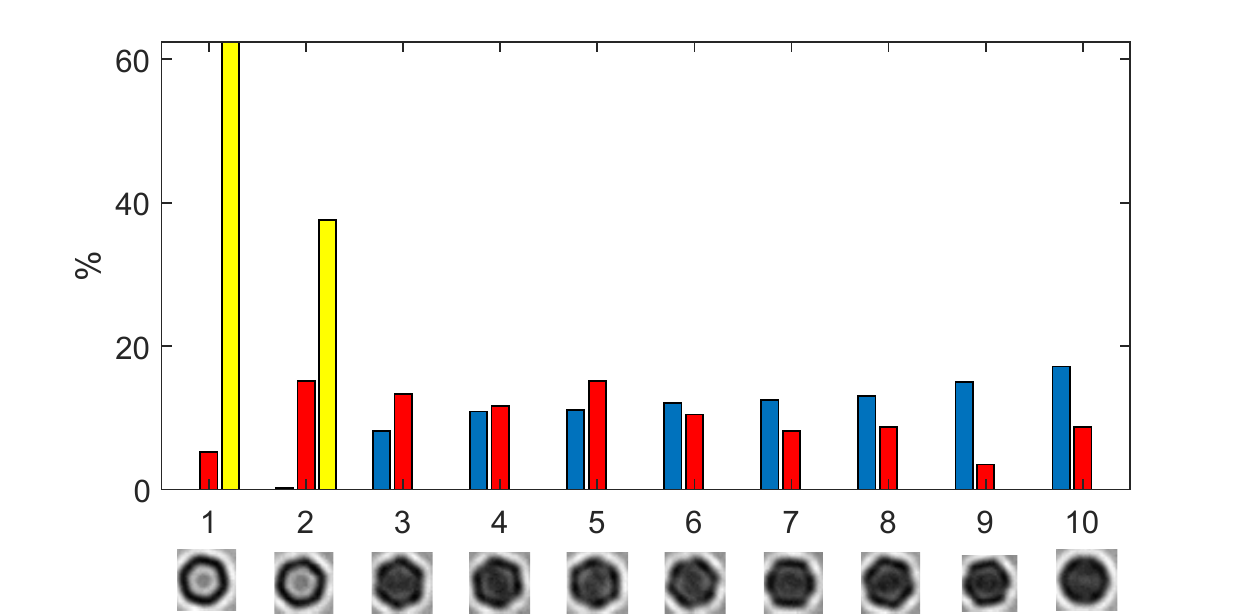}}
\subcaption{t-SNE}
\end{subfigure}
\begin{subfigure}[b]{0.32\textwidth}
\centerline{\includegraphics[width=0.99\columnwidth]{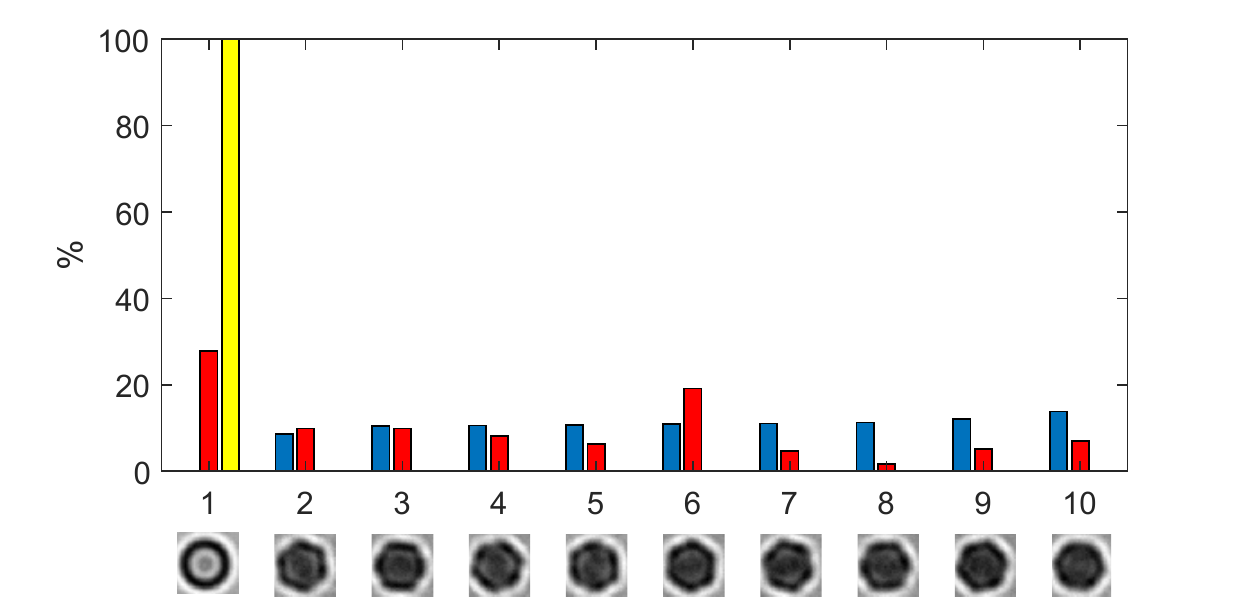} }
\subcaption{U-Map}
\end{subfigure}

\caption{Bar plot of the presence of different expert classed particles in the clusters and a mean image below each class to give structural information using k-means on the processed data. 
In (a) cluster 1 is dominated by the empty particles (yellow) and cluster 2 is dominated by the particles with uncertain packaging (red). Particles with dense core are divided in several clusters indicating different orientation of the icosahedral structure of the particle (blue). 
In (b) and (c) the particles with uncertain packaging are distributed between all clusters.
}
\label{AAVs2}
\end{center}
\vskip -0.2in
\end{figure}

\subsection{Comparison with state of the art and discussion}

In this section we compare the proposed approach with t-SNE and U-Map. In order to provide a quantitative comparison, we cluster the output from Sections \ref{sec:ex1} and \ref{sec:ex2} using K-means and compare the results using three clustering performance evaluation metrics, namely 
 Purity \cite{manning2010introduction}, Normalized mutual information (NMI) \cite{strehl2002cluster} and Adjusted Rand Index (ARI) \cite{hubert1985comparing}, see table \ref{AAVex} for results. For this class of problems our proposed method gives a better result, which can be linked to the robustness of representing the data in terms of the transition probabilities directly rather than coordinates generating the transition probabilities.

\begin{table}[H]
\caption{Comparative indices of the proposed method, t-SNE and U-Map for the two examples provided in Section \ref{ssec:Examples}.}
\label{AAVex}
\vskip 0.15in
\begin{center}
\begin{small}
\begin{sc}
\begin{tabular}{lccc|lccc}
\toprule
Example \ref{sec:ex1} & ARI & NMI & Purity &Example \ref{sec:ex2} & ARI & NMI & Purity \\
\midrule
Proposed & 0.9214         & 0.8789            & 0.9733 & Proposed & 0.1103    & 0.3809       & 0.9257\\
t-SNE     & 0.8777         & 0.8255            & 0.9578 & t-SNE  & 0.0790            & 0.3027            & 0.8866  \\
U-MAP     & 0.3306         & 0.2862            & 0.7178  & UMAP  & 0.0975            & 0.3136            & 0.8866 
\\
\bottomrule

\end{tabular}
\end{sc}
\end{small}
\end{center}
\vskip -0.1in
\end{table}

\section{Future work}
In this paper we have discussed an orthogonalization process of data which can be used in clustering and improved visualization. The scale harmonics is to be further investigated as several scales may be of interest at the same time. Also, the orthogonalization can potentially be used to accelerate the Gromov-Wasserstein distance calculations and hence be used in matching geometric objects of rigid structures or the geometry of clusters of conceptual classes.

\section{Acknowledgements and Disclosure of Funding}
This work was funded by Vironova AB and the Swedish innovation agency Vinnova through AAVNova (2019-00103) and GeneNova (2021-02640). The data for the example was provided by Vironova.

\medskip
{
\small

\bibliography{refs}
\bibliographystyle{plain}

}

\end{document}